\documentclass[10pt]{article}
\usepackage{arxiv}

\usepackage{amsmath,amsfonts,amssymb,amsthm}

\usepackage{algorithmic}
\usepackage{algorithm}

\usepackage{graphicx}
\usepackage{array}
\usepackage{booktabs,xcolor,threeparttable}
\usepackage{tabularx}
\usepackage{makecell}
\usepackage{multirow}
\usepackage{float}     % provides the [H] float specifier for fixed figure placement

\usepackage[numbers,sort&compress]{natbib}

\usepackage{url}

\usepackage[colorlinks=true,allcolors=blue]{hyperref}

\newtheorem{theorem}{Theorem}
\newtheorem{lemma}{Lemma}
\begin{document}

\title{X-LogSMask: Expand Transformer for Graph-Structured Data}
\author{%
  Leyan Li\thanks{E-mail: li\_leyan@163.com; rangrn6907@163.com; zzx2207621676@hotmail.com; Hlp\_afeu@163.com} \And
  Rennong Yang \And
  Zhenxing Zhang \And
  Liping Hu%
}

\maketitle

\begin{abstract}
Transformers have become general-purpose architectures, but their all-to-all self-attention is poorly matched to graph data, whose interactions are sparse, structured and multi-scale. Existing Graph Transformers address this mismatch through structural encodings, hybrid message-passing modules or learned attention constraints, often introducing additional complexity and limited interpretability. Here we introduce X-LogSMask, an e\underline{x}plainable multi-head \underline{log}arithmic \underline{s}tructural \underline{mask} that injects symmetrically normalized graph topology directly into attention logits. The logarithmic transform converts structural connectivity into a topology-aware gating signal, suppressing unsupported node interactions while preserving feature-dependent attention. By assigning different powers of the normalized adjacency matrix to different attention heads, X-LogSMask gives each head a defined structural radius and supports multi-hop information propagation within a single layer. We further show that a standard Transformer encoder can be interpreted as one-step message passing on a complete graph, motivating X-LogSMask as a topology-constrained alternative to unrestricted self-attention. Across 20 node-, edge- and graph-level benchmarks, Transformers equipped with X-LogSMask achieve state-of-the-art performance on 13 datasets and remain competitive in a lightweight one-layer configuration. These results show that simple, interpretable structural masks can make self-attention an effective graph-learning operator without changing the Transformer architecture. The code is available at \url{https://github.com/LiLeyan-0120/X-LogSMask}.
\end{abstract}

\keywords{Graph transformer, graph neural network, explainable learning, logarithmic structural mask, multi-hop attention}

\section{Introduction}

Graphs provide a natural representation for relational systems, including citation networks, social networks, molecules, biological systems and transportation networks. Unlike sequences or images, graphs are defined by irregular neighborhoods, sparse connectivity and dependencies that may span multiple topological scales. Graph neural networks (GNNs) address this structure by propagating information along observed edges, thereby aligning model computation with graph topology \cite{kipf2017semi, xu2019gin}. By contrast, the standard Transformer was developed around self-attention, which permits every token to interact with every other token \cite{vaswani2017attention}. This design gives Transformers strong representational flexibility, but it also creates a mismatch when the input is a graph.

\begin{figure}
    \centering
    \includegraphics[width=0.5\linewidth]{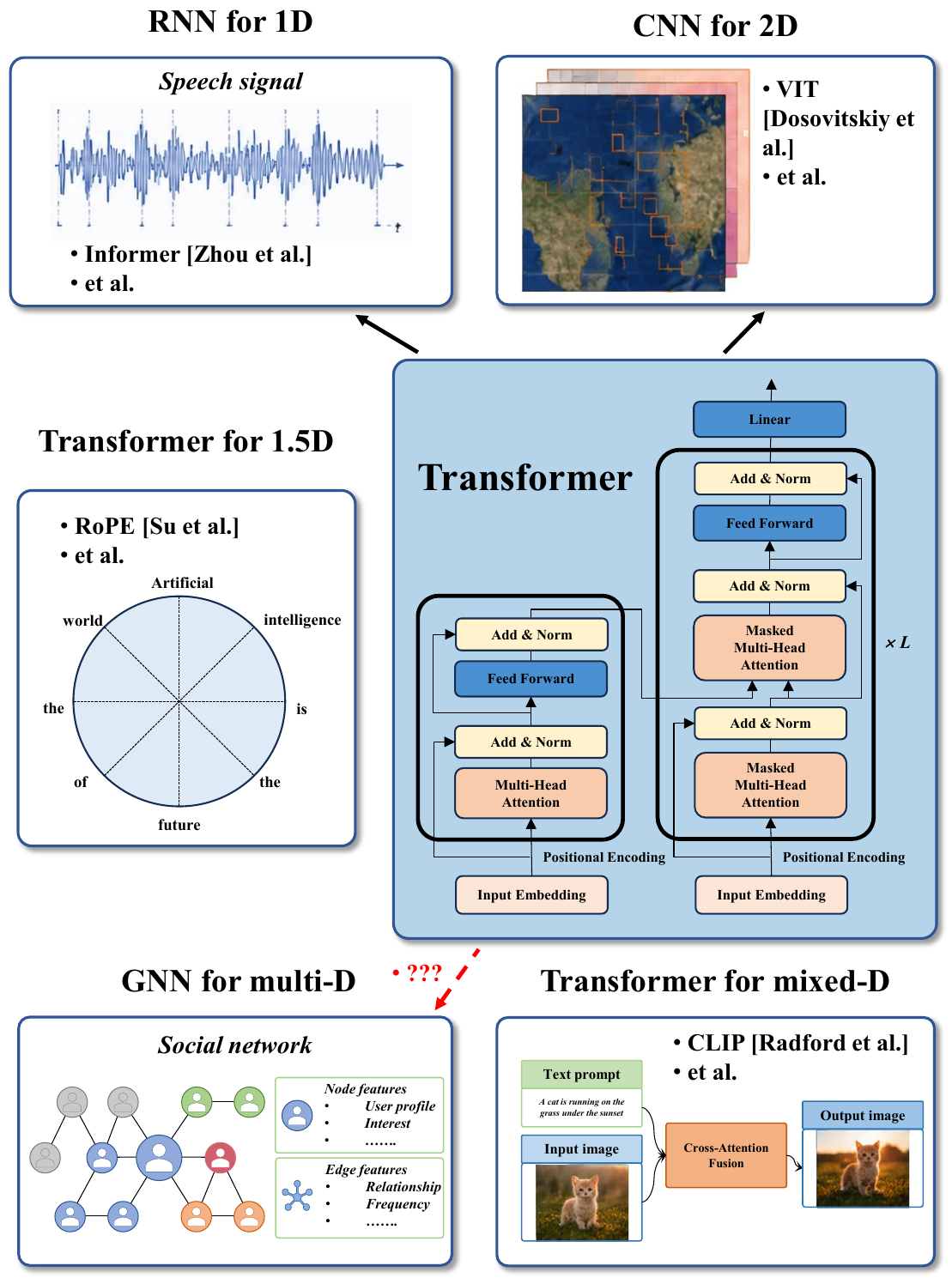}
    \caption{Transformer for all kinds of data.}
    \label{fig:nDDATA}
\end{figure}

Transformers have been successfully extended beyond language to vision, temporal modelling and other structured domains \cite{devlin2019bert, brown2020language, dosovitskiy2020image, zhou2021informer, gong2021ast}. Their success suggests that self-attention can serve as a general-purpose information aggregation operator when equipped with an appropriate inductive bias. In sequential data, this role is played by positional encodings, which supply the order information absent from permutation-invariant attention. For graph data, however, the required inductive bias is not a one-dimensional position but a sparse, relational and multi-hop topology. Without such a bias, all-to-all self-attention can mix topologically unrelated nodes and weaken the structural constraints that make message passing effective.

Recent Graph Transformer models have therefore sought to inject graph structure into attention. One line of work incorporates attention mechanisms into GNN architectures, as in Graph Attention Networks \cite{velivckovic2018graph}. Another combines message-passing modules with Transformer blocks in hybrid pipelines \cite{zhang2020transgnn, rampasek2022graphgps}. A third line directly modifies the attention mechanism through positional encodings, structural biases or learned attention constraints, including Graphormer, GradFormer and Eigenformer \cite{ying2021graphormer, liu2024gradformer, qiu2022eigenformer}. These methods have shown that structural priors are important for graph-aware self-attention. However, they often introduce additional architectural complexity, rely on implicit structural effects, or provide limited control over what each attention head captures.

Here we introduce X-LogSMask, an explainable multi-head logarithmic structural mask for adapting Transformers to graph-structured data. X-LogSMask is constructed from a symmetrically normalized adjacency matrix with self-loops and is injected additively into the attention logits. The logarithmic transform converts normalized structural connectivity into a topology-aware gating signal in attention space. As a result, unsupported node interactions are strongly suppressed, while feature-dependent attention scores remain discriminative. This design preserves the core Transformer architecture, but replaces unrestricted communication with topology-constrained message passing. X-LogSMask further assigns different powers of the normalized adjacency matrix to different attention heads. Each head is therefore associated with a defined structural radius, ranging from local neighborhoods to higher-order graph context. This head-wise decomposition gives multi-head attention an explicit structural interpretation and allows multi-hop information propagation within a single Transformer layer. It also reduces the need to stack many message-passing layers, which can aggravate over-smoothing and over-squashing in conventional GNNs.

\begin{table}[htbp]
  \centering
  \scriptsize
  \setlength{\tabcolsep}{0pt}
  \setlength{\extrarowheight}{1pt}
  \renewcommand{\arraystretch}{1.16}
  \caption{Summary of the Representative Graph Transformer Models}
  \label{tab:classic_gnn_transformer_models}
  \begin{tabularx}{\textwidth}{>{\centering\arraybackslash}p{0.15\textwidth} >{\centering\arraybackslash}p{0.06\textwidth} >{\centering\arraybackslash}p{0.27\textwidth} >{\centering\arraybackslash}X}
    \toprule
    \makecell[c]{Model} & \makecell[c]{Year} & \makecell[c]{Graph Structural Encoding} & \makecell[c]{Formulaic Representation} \\
    \midrule
    \makecell[c]{Graphormer\cite{ying2021graphormer}} & 2021 & Central, spatial, and edge encoding & $\alpha_{ij}\propto\exp\!\left(\frac{(\mathbf{h}_i\mathbf{W}^Q)(\mathbf{h}_j\mathbf{W}^K)^{\mathrm{T}}}{\sqrt{d}}+b_{\phi(v_i,v_j)}+\frac{1}{N}\sum_{n=1}^{N}\mathbf{x}_{e_n}(\mathbf{w}_n^E)^{\mathrm{T}}\right)$ \\
    \makecell[c]{Gradformer\cite{liu2024gradformer}} & 2024 & Decay mask with learnable constraints & $\alpha_{ij}\propto\exp\!\left(\frac{\mathbf{q}_i\mathbf{k}_j^{\mathrm{T}}}{\sqrt{d}}\cdot\lambda^{\mathrm{ReLU}(\psi(v_i,v_j)-sp)}\right)$ \\
    \makecell[c]{Eigenformer\cite{qiu2022eigenformer}} & 2024 & Structural attention bias & $\alpha_{ij}\propto\exp\!\left(\frac{\mathbf{q}_i\mathbf{k}_j^{\mathrm{T}}}{\sqrt{d}}+\beta\sum_{k=1}^{K}\frac{\mathbf{u}_k(i)\mathbf{u}_k(j)}{\lambda_k}\right)$ \\
    Ours & 2025 & X-LogSMask & $\alpha_{ij}\propto\exp\!\left(\frac{\mathbf{q}_i\mathbf{k}_j^{\mathrm{T}}+\log\!\left(\mathbf{D}^{-1/2}\mathbf{G}_{\mathrm{head}}\mathbf{D}^{-1/2}+\epsilon\right)}{\sqrt{d}}\right)$ \\
    \bottomrule
  \end{tabularx}
\end{table}

Our contributions are as follows:
\begin{enumerate}
  \item We propose X-LogSMask, a compact and theoretically motivated structural bias that replaces positional encodings in the vanilla Transformer. X-LogSMask is constructed from a symmetrically normalized adjacency matrix and a logarithmic transform, and is injected additively into the attention logits to suppress irrelevant communications while preserving the discriminative scaling properties of attention.
  \item An explainable multi-head mechanism is designed for LogSMask, where different heads attend to different powers of the normalized adjacency matrix. This head-wise decomposition enables each attention head to specialize in a specific structural radius, accelerating multi-hop information propagation within a single layer. Thus, our method supports an efficient 1-layer Transformer solution that is practical for large-scale graph tasks.
\end{enumerate}

The article is organized as follows. Section~II distinguishes X-LogSMask from other Graph Transformer designs. Section~III establishes the theoretical foundation that links the standard Transformer encoder to message passing. Section~IV details the architectural design of X-LogSMask. Section~V validates the model through experiments on node-, edge-, and graph-level benchmarks. Section~VI concludes the article and discusses future research directions.

\section{What’s the difference between ours and others}

The Introduction section outlined three principal paradigms for adapting Transformers to graph-structured data; our approach belongs to the third paradigm, which directly augments the self-attention mechanism with graph-derived structural biases. To clarify distinctions among methods in this family, Table \ref{tab:classic_gnn_transformer_models} summarizes representative structural-encoding strategies and their formal instantiations.

Broadly speaking, existing methods fall into two conceptual classes. Additive approaches (e.g., Graphormer\cite{ying2021graphormer} and Eigenformer\cite{qiu2022eigenformer}) incorporate structural encodings by adding bias terms to the attention logits, thereby preserving topological information in a form that is straightforwardly learnable. Multiplicative approaches (e.g., GradFormer\cite{liu2024gradformer}) instead scale the attention scores, directly modulating message-passing intensity and providing an effective mechanism to suppress spurious interactions between distant or non-adjacent nodes. Each strategy has complementary advantages: additive biases facilitate rich structural representation and gradient-based learning, whereas multiplicative biases exert stronger, direct control over inter-node communication.

X-LogSMask is additive in formulation but leverages a minimal-value logarithmic masking scheme that unifies the strengths of both paradigms. By applying a log-transformed, symmetrically normalized adjacency-derived mask to the attention logits and assigning head-specific powers of the adjacency to different heads, X-LogSMask (i) retains the representational flexibility of additive encodings, and (ii) enforces selective attenuation of irrelevant connections akin to multiplicative bias. This hybrid behavior enables explicit regulation of message-passing strength while maintaining efficient structural encoding.

Moreover, unlike prior methods that require deep stacking to capture long-range dependencies, our mask embeds multi-hop structural cues within a single layer via head-wise specialization: lower-order heads focus on local neighborhoods while higher-order heads capture progressively longer-range interactions. This design accelerates information propagation, reduces variance introduced by unconstrained multi-head attention, and yields both improved training efficiency and empirical performance.

In summary, compared to competing designs, X-LogSMask offers a low-complexity, architecturally conservative modification to the Transformer that (i) preserves the original attention mechanism, (ii) injects interpretable, multi-scale structural priors, and (iii) scales practically to a range of graph tasks. These characteristics make it a flexible building block for extending Transformer architectures to other structured modalities, including sequential and visual domains.

\section{Preliminary}
\label{sec:preliminay}

\begin{table}[htbp]
  \centering
  \scriptsize
  \setlength{\tabcolsep}{2pt}
  \renewcommand{\arraystretch}{1.10}
  \caption{Correspondence between MPNN and Transformer encoder operations.}
  \label{tab:mpnn_transformer_correspondence}
  \begin{tabularx}{\textwidth}{
    >{\centering\arraybackslash}p{0.15\textwidth}
    >{\centering\arraybackslash}X
    >{\centering\arraybackslash}p{0.16\textwidth}
    >{\centering\arraybackslash}X
  }
    \toprule
    \makecell[c]{MPNN} & \makecell[c]{Formulaic representation} & \makecell[c]{Transformer} & \makecell[c]{Implementation} \\
    \midrule
    Message generation & $m_{ij}^{(l)} = {\phi ^{(l)}}\left( {{\bf{h}}_i^{(l - 1)},{\bf{h}}_j^{(l - 1)},{e_{ij}}} \right)$ & Self-attention calculation & ${\phi ^{(l)}}( \cdot , \cdot ,1) = {\rm{softmax}}\left( {\frac{{{{\bf{q}}_i}{\bf{k}}_j^{\rm{T}}}}{{\sqrt d }}} \right) \cdot {{\bf{v}}_j}$ \\
    Aggregation & ${\bf{h}}_i^{(l),{\rm{agg}}} = { \oplus _{j \in {\cal N}(i)}}m_{ij}^{(l)}$ & Attention output & ${ \oplus _{j \in {\cal N}(i)}} = {\rm{LN}}(\sum\limits_{j \in {\cal V}} {m_{ij}^{(l)}}  + {\bf{h}}_i^{(l - 1)})$ \\
    Node update & ${\bf{h}}_i^{(l)} = {\psi ^{(l)}}({\bf{h}}_i^{(l - 1)},{\bf{h}}_i^{(l),{\rm{agg}}})$ & FFN output & ${\psi ^{(l)}} = {\rm{LN}}({\bf{h}}_i^{(l),{\rm{agg}}} + {\rm{FFN}}({\bf{h}}_i^{(l),{\rm{agg}}}))$ \\
    \bottomrule
  \end{tabularx}
\end{table}

\begin{theorem}
The standard Transformer encoder layer implements a one-step message-passing operator on the complete graph $\mathcal{G}_c=(\mathcal{V},\mathcal{E}_c)$ with $\mathcal{E}_c = \mathcal{V}\times\mathcal{V}$, and thus constitutes a special case of message passing neural networks (MPNN).
\label{the:mpnn_transformer_correspondence}
\end{theorem}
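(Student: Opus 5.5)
We argue directly from the definition of an MPNN layer in Table~\ref{tab:mpnn_transformer_correspondence}. An MPNN layer on a graph $\mathcal{G}=(\mathcal{V},\mathcal{E})$ consists of three ingredients: a message function $\phi^{(l)}$ evaluated on each edge $(i,j)\in\mathcal{E}$, a permutation-invariant aggregator $\oplus_{j\in\mathcal{N}(i)}$, and an update function $\psi^{(l)}$. The claim follows once we exhibit concrete choices of these three maps on $\mathcal{G}_c$ whose composition reproduces the encoder layer exactly, i.e.
\[
\mathbf{h}_i^{(l)}=\mathrm{LN}\bigl(\mathbf{a}_i+\mathrm{FFN}(\mathbf{a}_i)\bigr),\qquad
\mathbf{a}_i=\mathrm{LN}\bigl(\mathbf{h}_i^{(l-1)}+\mathrm{Attn}(\mathbf{H}^{(l-1)})_i\bigr).
\]
We first treat a single head and then extend to multiple heads.

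\textbf{Step 1: neighborhoods.} On $\mathcal{G}_c$ we have $\mathcal{N}(i)=\mathcal{V}$, since self-loops are included. Every edge feature is set to the constant $e_{ij}=1$, which matches $\phi^{(l)}(\cdot,\cdot,1)$ in the table.

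\textbf{Step 2: messages.} Here lies the main obstacle. The softmax weight
\[
\alpha_{ij}=\frac{\exp(\mathbf{q}_i\mathbf{k}_j^{\mathrm{T}}/\sqrt d)}{\sum_{k}\exp(\mathbf{q}_i\mathbf{k}_k^{\mathrm{T}}/\sqrt d)}
\]
depends on all nodes through its normalizer, so it is not literally a function of the pair $(\mathbf{h}_i,\mathbf{h}_j)$ alone. We plan two ways to handle this and commit to the first.

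\begin{enumerate}
\item Use the broad MPNN framework, in which aggregators may be any permutation-invariant function of the neighbor multiset (as in attention-based GNNs such as GAT). Define the unnormalized message as the pair
\[
m_{ij}=\Bigl(\exp\bigl(\mathbf{q}_i\mathbf{k}_j^{\mathrm{T}}/\sqrt d\bigr)\,\mathbf{v}_j,\;\exp\bigl(\mathbf{q}_i\mathbf{k}_j^{\mathrm{T}}/\sqrt d\bigr)\Bigr),
\]
which depends only on $(\mathbf{h}_i,\mathbf{h}_j)$ through $\mathbf{q}_i=\mathbf{h}_i\mathbf{W}^Q$, $\mathbf{k}_j=\mathbf{h}_j\mathbf{W}^K$ and $\mathbf{v}_j=\mathbf{h}_j\mathbf{W}^V$. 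Sum these pairs over $j\in\mathcal{V}$ and divide the first component by the second. The result is exactly $\sum_j\alpha_{ij}\mathbf{v}_j$, and it is permutation invariant because it is built from sums.

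\item Alternatively, note that the normalizer is a function of $\mathbf{h}_i$ and the multiset $\{\mathbf{h}_k\}_{k\in\mathcal{V}}$. On the complete graph this multiset is the same for every node, so it can be treated as a global context.
\end{enumerate}

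\textbf{Step 3: aggregation and update.} The remainder is bookkeeping.
\begin{itemize}
\item \emph{Multi-head attention.} Let the message carry one such pair per head. The aggregator computes each head's normalized sum, concatenates the heads and applies $\mathbf{W}^O$. Its output is $\mathrm{Attn}(\mathbf{H}^{(l-1)})_i$, which is a permutation-invariant function of the neighbor messages.
\item \emph{First residual and LayerNorm.} The residual addition and LayerNorm that produce $\mathbf{a}_i$ are node-wise maps of $\mathbf{h}_i^{(l-1)}$ and the aggregate. We therefore absorb them into $\psi^{(l)}$, or into the aggregator as the table does, since they use only node $i$'s own state. Either choice is a legitimate MPNN component.
\item \emph{Feed-forward block.} $\psi^{(l)}(\mathbf{h}_i^{(l-1)},\mathbf{a}_i)=\mathrm{LN}(\mathbf{a}_i+\mathrm{FFN}(\mathbf{a}_i))$ is applied identically at every node, so it is a valid update function.
\end{itemize}

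Composing $\phi^{(l)}$, $\oplus$ and $\psi^{(l)}$ gives $\mathbf{h}_i^{(l)}$ equal to the encoder output for every $i$. Hence one encoder layer is one MPNN step on $\mathcal{G}_c$. Since every Transformer layer admits such a representation, while a general MPNN need not run on the complete graph nor use this particular $\phi$, the Transformer encoder is a special case of MPNN, which proves the theorem.
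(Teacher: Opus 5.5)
Your proof is correct, and it takes a more careful route than the paper at exactly the point you flagged in Step~2. The paper sets $m_{ij}^{(l)}=\alpha_{ij}\mathbf{v}_j$ with the already-normalized softmax weight and sums over $j\in\mathcal{V}$. It then declares this ``identical in form'' to MPNN aggregation with $\mathcal{N}(i)=\mathcal{V}$, and assigns the residual, LayerNorm and FFN to $\psi^{(l)}$. That argument never addresses that $\alpha_{ij}$ depends on every key $\mathbf{k}_r$ through its denominator. So the paper's $m_{ij}^{(l)}$ is not literally of the form $\phi^{(l)}\bigl(\mathbf{h}_i^{(l-1)},\mathbf{h}_j^{(l-1)},e_{ij}\bigr)$ that its own MPNN definition requires.

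Your pair-valued message $\bigl(\exp(\mathbf{q}_i\mathbf{k}_j^{\mathrm{T}}/\sqrt d)\,\mathbf{v}_j,\ \exp(\mathbf{q}_i\mathbf{k}_j^{\mathrm{T}}/\sqrt d)\bigr)$ repairs this gap. The message becomes genuinely edge-local, and the normalization is moved to after aggregation. Your explicit treatment of multiple heads is also something the paper leaves implicit.

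You can drop the appeal to a ``broad'' framework with arbitrary permutation-invariant aggregators. Plain sum aggregation suffices if you place everything else inside $\psi^{(l)}$, which sees $\mathbf{h}_i^{(l-1)}$ and the summed pairs:
\begin{itemize}
\item the per-head division;
\item the concatenation and $\mathbf{W}^O$;
\item the first residual and LayerNorm;
\item the FFN block.
\end{itemize}
With that placement the encoder layer fits even the strict sum-aggregation MPNN formulation. It also avoids the table's slightly awkward choice of putting the residual term $\mathbf{h}_i^{(l-1)}$ inside the aggregator. Your option~2 (global context) is then unnecessary.

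The paper's version is shorter and maps one-to-one onto its correspondence table. Yours turns a formal resemblance into an exact identification of the layer as one MPNN step on $\mathcal{G}_c$.
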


\begin{proof}\let\qed\relax
Consider the standard formulation of an MPNN layer, which is defined by a message function, a permutation-invariant aggregation operator, and an update function. At layer $l$, the message sent from node $j$ to node $i$ is given by
\begin{equation}
m_{ij}^{(l)}=\phi^{(l)}\left(\mathbf{h}_i^{(l-1)},\mathbf{h}_j^{(l-1)},e_{ij}\right)
\end{equation}
where $\mathbf{h}_i^{(l-1)}$ denotes the representation of node $i$ from the preceding layer and $e_{ij}$ denotes the edge attribute.

The aggregated representation is
\begin{equation}
\mathbf{h}_{i,\mathrm{agg}}^{(l)}
=
\bigoplus_{j\in\mathcal{N}(i)} m_{ij}^{(l)}
\end{equation}

The layer output is obtained through
\begin{equation}
\mathbf{h}_i^{(l)}
=
\psi^{(l)}\left(\mathbf{h}_i^{(l-1)},\mathbf{h}_{i,\mathrm{agg}}^{(l)}\right)
\end{equation}

For a Transformer encoder layer applied to the same node set, define $\mathbf{q}_i=\mathbf{h}_i^{(l-1)}\mathbf{W}^Q$, $\mathbf{k}_j=\mathbf{h}_j^{(l-1)}\mathbf{W}^K$, and $\mathbf{v}_j=\mathbf{h}_j^{(l-1)}\mathbf{W}^V$. The contribution of token $j$ to token $i$ can then be expressed as an attention-weighted message,
\begin{equation}
m_{ij}^{(l)}
=
\alpha_{ij}\mathbf{v}_j,\quad
\alpha_{ij}
=
\frac{
\exp\left(\mathbf{q}_i\mathbf{k}_j^{\mathrm{T}}/\sqrt{d}\right)
}{
\sum_{r\in\mathcal{V}}
\exp\left(\mathbf{q}_i\mathbf{k}_r^{\mathrm{T}}/\sqrt{d}\right)
}
\end{equation}

Accordingly, the self-attention output for token $i$ is
\begin{equation}
\mathbf{z}_i^{(l)}
=
\sum_{j\in\mathcal{V}} m_{ij}^{(l)}
\end{equation}

This expression is identical in form to MPNN aggregation when the neighborhood of each node is taken to be the full vertex set, that is, $\mathcal{N}(i)=\mathcal{V}$ for all $i$. This choice corresponds precisely to message passing on the complete graph $\mathcal{G}_c=(\mathcal{V},\mathcal{V}\times\mathcal{V})$. The subsequent residual connection, layer normalization, and feed-forward network implement an update map of the form $\psi^{(l)}$. Therefore, a standard Transformer encoder layer can be regarded as an MPNN layer instantiated on $\mathcal{G}_c$. A step-by-step correspondence is provided in Table \ref{tab:mpnn_transformer_correspondence}.
\end{proof}

\begin{lemma}
The fully-connected nature of the Transformer arises from the undiscriminating message passing among tokens induced by the self-attention mechanism.
\label{lem:mpnn_transformer_correspondence}
\end{lemma}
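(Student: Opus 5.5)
The plan is to make Lemma~\ref{lem:mpnn_transformer_correspondence} precise by reading it through Theorem~\ref{the:mpnn_transformer_correspondence}. I will show that the effective communication graph of a self-attention layer is the support of its attention matrix $\mathbf{A}=(\alpha_{ij})$, and that for unmasked softmax attention this support is always $\mathcal{V}\times\mathcal{V}$. Formally, I will define the \emph{induced message graph} of a layer as $\mathcal{G}_\alpha=(\mathcal{V},\{(i,j):\alpha_{ij}>0\})$. By Theorem~\ref{the:mpnn_transformer_correspondence}, the layer is an MPNN whose message $m_{ij}=\alpha_{ij}\mathbf{v}_j$ is nonzero generically exactly when $(i,j)$ lies in this edge set. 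The lemma then reduces to the claim $\mathcal{G}_\alpha=\mathcal{G}_c$, with the added point that no input-independent topology can be read off from $\mathbf{A}$.

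The first step is positivity. Since $\exp(\cdot)>0$ and the partition sum $\sum_{r\in\mathcal{V}}\exp(\mathbf{q}_i\mathbf{k}_r^{\mathrm{T}}/\sqrt{d})$ is finite, every $\alpha_{ij}>0$ for all inputs $\mathbf{h}^{(l-1)}$ and all parameters $\mathbf{W}^Q,\mathbf{W}^K$. Hence $\mathcal{G}_\alpha=\mathcal{G}_c$ identically. This gives the fully-connected part of the statement.

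The second step makes ``undiscriminating'' precise. I will note that the logits $\mathbf{q}_i\mathbf{k}_j^{\mathrm{T}}$ depend only on the features $\mathbf{h}_i,\mathbf{h}_j$ and never on any edge indicator $e_{ij}$; in Table~\ref{tab:mpnn_transformer_correspondence} this is $\phi(\cdot,\cdot,1)$, i.e.\ $e_{ij}\equiv1$. Consequently, for any permutation $\pi$ of $\mathcal{V}$, the attention matrix computed on permuted inputs is $P_\pi \mathbf{A} P_\pi^{\mathrm{T}}$. So the set of admissible message pairs is invariant under every permutation, and the only permutation-invariant, input-independent edge sets are $\emptyset$, the self-loops, and $\mathcal{V}\times\mathcal{V}$ minus or plus the diagonal. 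Combined with step one, the admissible structure is the complete graph: the mechanism treats all token pairs alike a priori and so cannot encode a sparse topology.

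The third step closes the argument by contrast. I will show that adding a logit bias $\mathbf{B}$ with $B_{ij}\to-\infty$ off a target edge set $\mathcal{E}$ changes the support to $\mathcal{E}$. This shows that full connectivity is caused precisely by the absence of such discrimination in the logits, and it motivates X-LogSMask.

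I expect the main obstacle to be the informal wording of ``undiscriminating''. I will fix it as the combination of structure-independence (no $e_{ij}$) and strict positivity, and argue via the permutation-equivariance step. The subtle point is that data-dependent attention can make some $\alpha_{ij}$ arbitrarily small, so the lemma must be stated about support, not magnitude.
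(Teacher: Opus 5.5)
Your proposal is correct, and its load-bearing Step~1 is exactly the paper's argument: every softmax coefficient satisfies $\alpha_{ij}>0$, so the induced communication graph is $\mathcal{V}\times\mathcal{V}$ with $\mathcal{N}(i)=\mathcal{V}$. You state it slightly more sharply, since the paper only asserts non-negativity, which alone would not guarantee that every token contributes. Your permutation-equivariance and masking steps go beyond what the paper does and are not needed for the conclusion; note only that X-LogSMask's finite bias $\log\epsilon$ never removes pairs from the support, so your Step~3 contrast holds only in the hard-mask limit $B_{ij}\to-\infty$ (with every row of $\mathcal{E}$ nonempty).
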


\begin{proof}
By construction, the softmax attention produces non-negative, normalized coefficients $\alpha_{ij}$ for every pair $(i,j)$, where the normalization for fixed $i$ sums over all $j\in\mathcal{V}$. Consequently, the aggregated message for node $i$ is

\begin{equation}
\mathbf{m}_i = \sum_{j\in\mathcal{V}} \alpha_{ij}\mathbf{v}_j
\label{eq:2}
\end{equation}
which includes contributions from every token in the input set. In the MPNN view, this means the receptive neighborhood $\mathcal{N}(i)$ equals the entire vertex set $\mathcal{V}$ for all $i$, i.e., the induced communication graph is complete. This absence of structural constraint on permitted message sources—each node both sends and receives messages to/from every other node—gives rise to the Transformer's effective full connectivity.
\end{proof}

\begin{figure}[!t]
    \centering
    \includegraphics[width=0.88\linewidth]{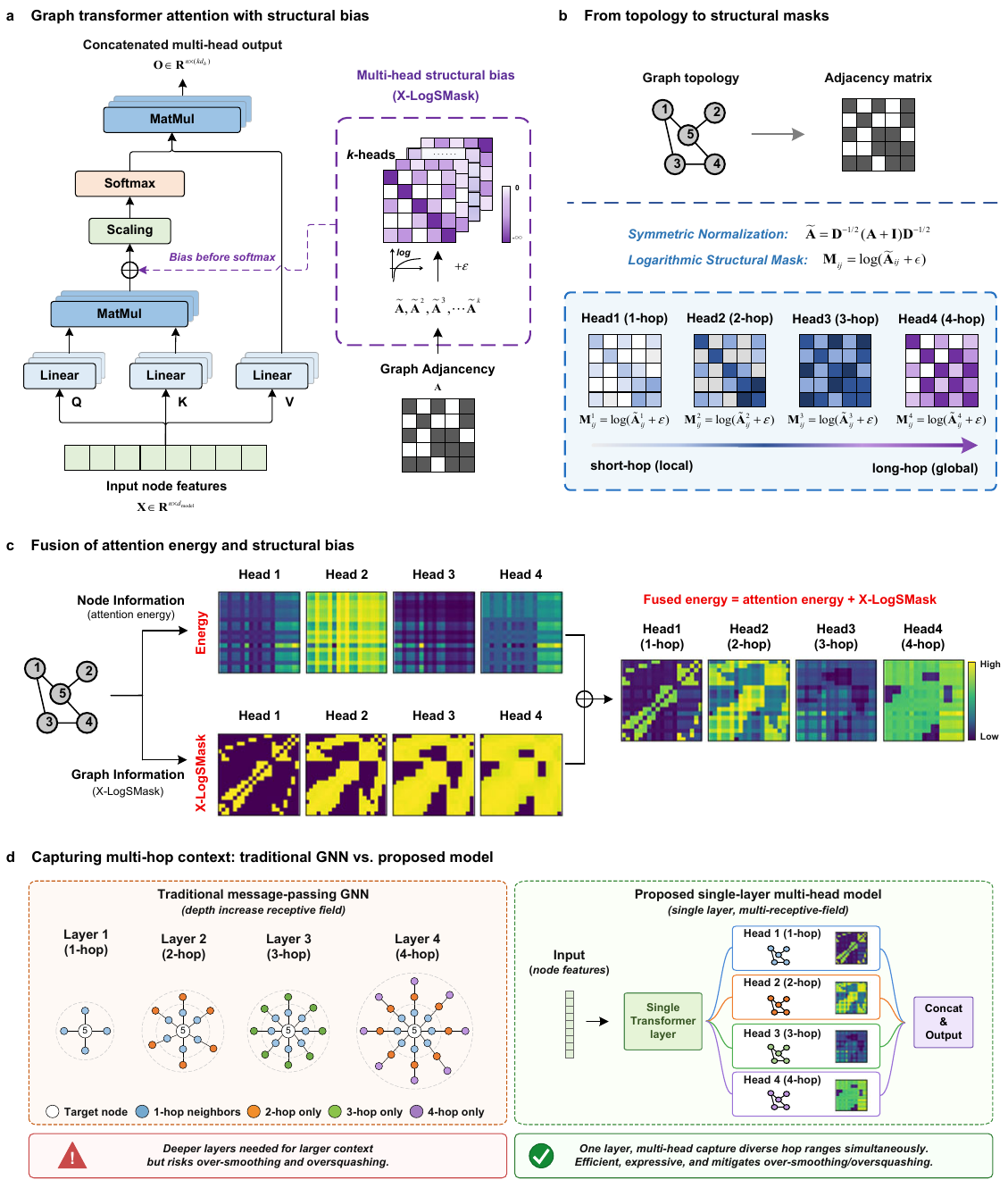}
    \caption{Model architecture and structural masking mechanism of X-LogSMask. (a) Graph Transformer attention with structural bias. Node features are linearly projected to query, key and value representations; the scaled query-key energy is combined with a multi-head X-LogSMask before softmax, thereby constraining value aggregation by graph topology. (b) Topology-to-mask construction. The graph adjacency is augmented with self-loops and symmetrically normalized to form $\hat{\mathbf{A}}$. A logarithmic transform converts normalized structural weights into attention-logit biases, and powers of $\hat{\mathbf{A}}$ yield head-specific masks from 1-hop local structure to 4-hop global context. (c) Fusion mechanism. Learned node-information energy and graph-information masks are added head by head, producing fused logits that couple feature-dependent attention with multi-hop structural priors. (d) Multi-hop context capture. Conventional message-passing GNNs expand the receptive field by stacking layers, which can aggravate over-smoothing and over-squashing. X-LogSMask assigns different hop ranges to different heads in a single Transformer layer, enabling parallel local-to-global message passing before concatenation and output.}
    \label{fig:model_arch}
\end{figure}

\section{Methodology}
\label{sec:methodology}

\subsection{Model Architecture}
\label{subsec:model_arch}

Graph-structured data are distinguished from one- and two-dimensional data by their non-Euclidean connectivity patterns. Let $\mathbf{A}\in\mathbb{R}^{n\times n}$ denote the graph adjacency matrix for a graph with $n$ nodes, where $\mathbf{A}_{ij}=1$ if nodes $i$ and $j$ are connected and $\mathbf{A}_{ij}=0$ otherwise. Although $\mathbf{A}$ encodes critical topological priors, canonical neural architectures such as the Transformer do not natively incorporate such relational inductive biases. A principal objective of this work is therefore to inject graph topology into the Transformer framework in a principled and computationally efficient manner.

As illustrated in Fig. \ref{fig:model_arch}, we introduce X-LogSMask (E\underline{x}plainable Multi-head \underline{Log}arithmic \underline{S}tructural \underline{Mask}), a compact structural bias constructed from $\mathbf{A}$ and integrated additively into the attention logits. X-LogSMask modifies attention computation while preserving the Transformer's core architecture and optimization benefits. Compared to prior Graph Transformer designs, X-LogSMask offers three key advantages:

\begin{enumerate}
  \item It enforces topologically constrained message passing by suppressing attention between non-adjacent nodes via an additive structural mask.
  \item It injects an explicit graph inductive bias directly into the attention matrix, improving the model's sensitivity to graph topology.
  \item By assigning distinct powers of a normalized adjacency to different attention heads, it encodes multi-hop structural information within a single layer, accelerating information propagation and reducing the randomness associated with vanilla multi-head attention.
\end{enumerate}

\subsection{X-LogSMask}

The standard Transformer's all-to-all attention disregards graph topology and can induce excessive, uninformative communication across distant nodes. X-LogSMask remedies this by embedding structural constraints into the attention logits while maintaining the differentiable, end-to-end training behavior of the Transformer.

\subsubsection{Symmetric Normalization of the Adjacency Matrix}

The raw adjacency matrix $\mathbf{A} \in \mathbb{R}^{n \times n}$ inherently suffers from uneven node degree distribution, which can distort graph signal propagation and bias message aggregation toward high-degree nodes. The phenomenon will be strengthened especially in the multi-hop connections. To stabilize propagation and mitigate degree-related distortion, we adopt symmetric normalization with self-loops:
\begin{equation}
\tilde{\mathbf{A}} = \mathbf{D}^{-1/2}(\mathbf{A} + \mathbf{I})\mathbf{D}^{-1/2}
\label{eq:1}
\end{equation}
where $\mathbf{I}$ is the identity matrix and $\mathbf{D}$ is the diagonal degree matrix with $\mathbf{D}_{ii}=\sum_j(\mathbf{A}_{ij}+\mathbf{I}_{ij})$. This normalization balances incoming and outgoing signals and improves numerical stability for subsequent matrix operations.

\subsubsection{Logarithmic Structural Mask}

Directly using $\tilde{\mathbf{A}}$ as an additive attention bias would risk an excessive focus on strong connections while collapsing the influence of weaker, yet potentially informative, links. To compress this dynamic range while preserving the relative ordering of connections, we apply a logarithmic transform. The resulting structural mask $\mathbf{M}$ is defined element-wise as:
\begin{equation}
\mathbf{M}_{ij} = \log(\tilde{\mathbf{A}}_{ij} + \epsilon)
\label{eq:log_mask}
\end{equation}
where $\epsilon$ is a small constant (e.g., $10^{-30}$) to ensure numerical stability for $\tilde{\mathbf{A}}_{ij}=0$.

Let $\mathbf{q}_i^{(r)}$, $\mathbf{k}_j^{(r)}$, and $\mathbf{v}_j^{(r)}$ denote the query, key, and value vectors in the $r$-th attention head, and let $d_h$ be the dimensionality of each head. X-LogSMask is added to the query-key energy before softmax:
\begin{equation}
\alpha_{ij}^{(r)}
=
\frac{
\exp\left(
\frac{
\mathbf{q}_i^{(r)}{\mathbf{k}_j^{(r)}}^{\mathrm{T}}
+
\mathbf{M}_{ij}^{(r)}
}{
\sqrt{d_h}
}
\right)
}{
\sum_{u\in\mathcal{V}}
\exp\left(
\frac{
\mathbf{q}_i^{(r)}{\mathbf{k}_u^{(r)}}^{\mathrm{T}}
+
\mathbf{M}_{iu}^{(r)}
}{
\sqrt{d_h}
}
\right)
}
\end{equation}
where $\mathbf{M}^{(r)}$ denotes the structural mask used by the $r$-th head. The corresponding head output is
\begin{equation}
\mathbf{z}_i^{(r)}
=
\sum_{j\in\mathcal{V}}
\alpha_{ij}^{(r)}\mathbf{v}_j^{(r)}
\end{equation}

This transformation maps bounded structural weights to non-positive log-space biases. Under the attention formulation above, the logarithmic mask acts as a scaled multiplicative structural gate after exponentiation. Specifically, when the $r$-th head uses a structural matrix $\mathbf{S}^{(r)}$ with $\mathbf{M}_{ij}^{(r)}=\log(\mathbf{S}_{ij}^{(r)}+\epsilon)$,
\begin{equation}
\begin{gathered}
\exp\left(
\frac{
\mathbf{q}_i^{(r)}{\mathbf{k}_j^{(r)}}^{\mathrm{T}}
+
\log\left(\mathbf{S}_{ij}^{(r)}+\epsilon\right)
}{
\sqrt{d_h}
}
\right)
\\
=
\exp\left(
\frac{
\mathbf{q}_i^{(r)}{\mathbf{k}_j^{(r)}}^{\mathrm{T}}
}{
\sqrt{d_h}
}
\right)
\left(\mathbf{S}_{ij}^{(r)}+\epsilon\right)^{1/\sqrt{d_h}}
\end{gathered}
\end{equation}
Thus, feature-dependent attention is preserved, while structurally weak or unsupported node interactions receive smaller pre-softmax weights. For absent connections, the small constant $\epsilon$ yields a strong negative bias rather than an exact zero mask, so X-LogSMask implements a near-hard topological constraint in practice.

\subsubsection{Explainable Multi-head Mechanism}

A naive scheme that broadcasts the same first-order structural mask across all attention heads would fail to capture multi-scale topology without deep stacking. To enable per-head specialization, we construct head-specific masks from successive powers of the normalized adjacency. Formally, for $k=1,\dots,h$ define
\begin{align}
\tilde{\mathbf{A}}^k &= \underbrace{\tilde{\mathbf{A}}\cdot\tilde{\mathbf{A}}\cdots\tilde{\mathbf{A}}}_{k\ \text{times}}
\end{align}
and the corresponding log-masks
\begin{equation}
\mathbf{M}^k_{ij} = \log\big(\tilde{\mathbf{A}}^k_{ij} + \epsilon\big)
\end{equation}

We associate the $k$-th head with mask $\mathbf{M}^k$, thereby encouraging the head to attend preferentially to paths of length~$k$. Intuitively, low-order heads emphasize local neighborhoods while higher-order heads capture longer-range dependencies. This structured division eliminates the randomness typical of standard multi-head attention and enables multi-scale graph analysis within a single layer—surpassing traditional message-passing GNNs that require deep architectures to capture long-range dependencies. Empirical results show that our model achieves near-state-of-the-art performance with only 1 layer.

Note that powers of $\tilde{\mathbf{A}}$ remain numerically stable under mild spectral conditions (for example, $\|\tilde{\mathbf{A}}\|_2 \leq 1$ when self-loops are included), and therefore additional normalization is typically unnecessary for modest $k$. By contrast, we observe empirically that an alternative pipeline—raising the raw adjacency matrix $\mathbf{A}+\mathbf{I}$ to the $k$-th power and only then applying a global normalization—produces severe degree imbalance in the resulting multi-hop matrices. Intuitively, this effect arises because $(\mathbf{A}+\mathbf{I})^k$ amplifies high-degree nodes multiplicatively, and a posteriori normalization cannot fully compensate for the resultant heterogeneity. This observation motivates our practical strategy of \emph{normalize first, then raise to the $k$-th power}, which yields more balanced multi-hop structural descriptors for downstream attention masking.

Furthermore, many message-passing GNNs deteriorate as depth increases due to over-smoothing or node homogenization induced by repeated multi-hop aggregation. This issue is absent in our proposed Transformer with X-LogSMask; instead, a problem with a similar origin is shifted to the number of heads in X-LogSMask. High-order adjacency matrices lack strong practical utility.

\section{Experiment}

We evaluated X-LogSMask across node-, edge- and graph-level prediction tasks to test whether structural masking improves Transformer-based graph learning. The evaluation included 20 benchmark datasets and compared two X-LogSMask configurations with message-passing GNNs, Graph Transformer variants and task-specific baselines. We first assessed predictive performance across the three task families, then examined the contribution of individual components through ablation, lightweight one-layer models and attention decomposition.

\subsection{Node Level}
\subsubsection{Experimental Settings}

We evaluated node-level classification on eight benchmark datasets: Cora, Citeseer, Pubmed, Computers, Photo, CS, Physics and WikiCS \cite{sen2008collective,shchur2018pitfalls,mernyei2020wikics}. These datasets cover citation, co-purchase, co-authorship and web-network domains, and span small citation graphs to larger academic networks. For each dataset, we strictly followed the train, validation and test splits used in established benchmark studies \cite{luo2024classic}.

We compared X-LogSMask with three representative message-passing GNNs, GCN, GraphSAGE and GAT \cite{kipf2017semi,hamilton2017inductive,velivckovic2018graph}, and seven Graph Transformer variants, SGFormer, Polynormer, GOAT, NodeFormer, NAGphormer, GraphGPS and Exphormer \cite{wu2023sgformer,deng2024polynormer,kong2023goat,wu2022nodeformer,chen2022nagphormer,rampasek2022graphgps,shirzad2023exphormer}. Baseline results were taken from recent comparative benchmarks under matched evaluation protocols \cite{luo2024classic}.

During training, the small citation datasets showed rapid memorization by the Transformer-based model. In several runs, training accuracy reached 100\% early, while validation and test accuracy remained lower. To reduce this effect, we used random subgraph sampling as training-time augmentation on small node-level datasets. This sampling procedure was not part of X-LogSMask itself and was not used for graph-level tasks.

\subsubsection{Node-level Performance}

On node classification, the full X-LogSMask model achieved the best average rank among all compared methods, with an average rank of 3.3 across eight datasets (Table \ref{tab:node_cls_accuracy_with_avgrank}). It obtained the highest accuracy on Photo, CS, Physics and WikiCS, and the second-highest accuracy on Computers. These gains were most evident on the larger node-classification benchmarks, where structural masking improved the Transformer's ability to aggregate graph-dependent information.

\begin{table}[htbp]
\centering
\small
\setlength{\tabcolsep}{2pt}
\renewcommand{\arraystretch}{1.2}
\caption{Experimental results on node-level tasks. The top \textbf{1\textsuperscript{st}} and \underline{$2^{nd}$} are highlighted.}
\label{tab:node_cls_accuracy_with_avgrank}
\begin{tabularx}{\textwidth}{>{\raggedright\arraybackslash}p{16em} *{9}{>{\centering\arraybackslash}X}@{}}
\toprule
\multicolumn{1}{c}{} & Cora & Citeseer & Pubmed & \makecell{Comput-\\ers} & Photo & CS & Physics & WikiCS & \makecell{Avg.\\Rk.} \\
\cmidrule(lr){2-2} \cmidrule(lr){3-3} \cmidrule(lr){4-4} \cmidrule(lr){5-5} \cmidrule(lr){6-6} \cmidrule(lr){7-7} \cmidrule(lr){8-8} \cmidrule(lr){9-9} \cmidrule(l){10-10}
\multicolumn{1}{c}{Metric} & Acc. & Acc. & Acc. & Acc. & Acc. & Acc. & Acc. & Acc. & - \\
\midrule
\multicolumn{10}{c}{\textit{MP-GNNs}} \\
\midrule
GCN [Kipf and Welling, 2017] & 81.60 & 71.60 & 78.80 & 89.65 & 92.70 & 92.92 & 96.18 & 77.47 & 10.6 \\
GraphSAGE [Hamilton et al., 2017] & 82.68 & 71.93 & 79.41 & 91.20 & 94.59 & 93.91 & 96.49 & 74.77 & 8.1 \\
GAT [Veličković et al., 2018] & 83.00 & 72.50 & 79.00 & 90.78 & 93.87 & 93.61 & 96.17 & 76.91 & 8.6 \\
\midrule
\multicolumn{10}{c}{\textit{Graph Transformer variants}} \\
\midrule
GraphGPS [Rampášek et al., 2022] & 82.84 & \textbf{72.73} & \underline{79.94} & 91.19 & 95.06 & 93.93 & 97.12 & 78.66 & 5.1 \\
NAGphormer [Chen et al., 2023] & 82.12 & 71.47 & 79.73 & 91.22 & 95.49 & 95.75 & 97.34 & 77.16 & 5.9 \\
NodeFormer [Wu et al., 2022] & 82.20 & 72.50 & 79.90 & 86.98 & 93.46 & 95.64 & 96.45 & 74.73 & 7.6 \\
GOAT [Kong et al., 2023] & 83.18 & 71.99 & 79.13 & 90.96 & 92.96 & 94.21 & 96.24 & 77.00 & 8.1 \\
Exphormer [Shirzad et al., 2023] & 82.77 & 71.63 & 79.46 & 91.47 & 95.35 & 94.93 & 96.89 & 78.54 & 5.9 \\
SGFormer [Wu et al., 2023] & \textbf{84.50} & \underline{72.60} & \textbf{80.30} & 91.99 & 95.10 & 94.78 & 96.60 & 73.46 & 4.9 \\
Polynormer [Deng et al., 2024] & \underline{83.25} & 72.31 & 79.24 & \textbf{93.68} & \underline{96.46} & 95.53 & 97.27 & \underline{80.10} & \underline{3.8} \\
\midrule
\makecell[l]{Trans. with X-LogSMask (1-layer)} & 80.80 & 71.00 & 79.20 & 91.21 & 95.69 & \underline{96.56} & \underline{97.65} & \textbf{80.36} & 5.9 \\
\makecell[l]{Trans. with X-LogSMask (ours)} & 82.70 & 71.70 & 79.60 & \underline{92.01} & \textbf{96.86} & \textbf{96.62} & \textbf{97.68} & \textbf{80.36} & \textbf{3.3} \\
\bottomrule
\end{tabularx}
\end{table}

The one-layer X-LogSMask model remained competitive despite its reduced depth. It ranked within the top three on Photo, CS, Physics and WikiCS, and achieved an average rank of 5.9. This result showed that head-wise multi-hop structural masks can provide useful receptive-field expansion without requiring deep Transformer stacks.

Performance was weaker on the smaller citation datasets. On Cora, Citeseer and Pubmed, X-LogSMask did not exceed the strongest competing Graph Transformer baselines. The corresponding training curves showed early saturation of training accuracy, indicating that overfitting limited performance in these low-data regimes. This limitation is consistent with the need for random subgraph augmentation during node-level training.

% \begin{figure}[!t]
%     \centering
%     \includegraphics[width=\linewidth]{exp_result_1.png}
%     \caption{Integrated benchmark and component analysis for X-LogSMask. (a) Average rank across node classification, edge regression, link prediction, and graph-level tasks; lower values indicate stronger performance. (b) Cross-task effect-size distribution relative to the strongest non-X-LogSMask baseline. (c) Accuracy margin for node classification across eight datasets. (d) Performance margin for graph-level benchmarks across seven datasets. (e) MRR gain for link prediction. (f) Relative MAE/RMSE reduction for edge regression, where higher values indicate larger error reductions. (g) Node-classification ablation results after removing symmetric normalization, LogSMask, or the explainable multi-head mechanism. (h) Mean and maximum accuracy drops caused by component removal. (i) Performance difference between the 1-layer and full X-LogSMask configurations across task families; positive values favour the 1-layer model, with edge-regression signs converted so that higher values are better.}
%     \label{fig:exp_result_1}
% \end{figure}

\subsection{Edge Level}
\subsubsection{Experimental Settings}

We evaluated edge-level performance in two settings: link prediction and edge regression. For link prediction, we used Cora and Citeseer \cite{sen2008collective}, with mean reciprocal rank (MRR) as the evaluation metric. For edge regression, we used three temporal edge-regression datasets from the TER benchmark, epic-games-plr, air-traffic-2015-rlr and air-traffic-2019-rlr \cite{Ozmen2024BenchmarkingER}. These datasets cover game-rating prediction and flight-delay estimation.

For link prediction, X-LogSMask was compared with three heuristic baselines, common neighbours (CN), Adamic-Adar (AA) and resource allocation (RA), and eight neural baselines, GCN, SAGE, SEAL, Neo-GNN, BUDDY, NCN, NCNC and LPFormer \cite{kipf2017semi,hamilton2017inductive,zhang2018link,yun2021neo,zhao2023buddy,liu2024ncn,Shomer_2024}. For edge regression, we compared against moving-average baselines and static message-passing GNNs, including eGCN, eGSage, eGAT, eGTransf and their rich-feature variants.

Dense attention has $O(n^2)$ memory and computational complexity. We therefore restricted these experiments to small- and medium-scale graphs, and did not use the edge-level benchmarks to assess ultra-large-graph scalability. Link-prediction datasets were randomly split into training, validation and test sets in an 85:5:10 ratio \cite{Shomer_2024}. Edge-regression datasets were split in a 7:1:2 ratio.

\subsubsection{Edge-level Performance}

For edge regression, X-LogSMask also ranked first across all three datasets (Table \ref{tab:edge_regress_results}). Both configurations achieved an average rank of 1.0 and occupied the best or second-best position in every MAE and RMSE column. On epic-games-plr, X-LogSMask reduced MAE to 0.0149 and RMSE to 0.0416, compared with 0.1062 and 0.1788 for the strongest baseline. On the two air-traffic datasets, the one-layer and full models again produced nearly identical error values, showing that the structural mask provided most of the edge-regression gain independently of model depth.

\begin{table}[htbp]
  \centering
  \caption{Experimental results on edge-level regression tasks. The top \textbf{1\textsuperscript{st}} and \underline{$2^{nd}$} are highlighted.}
  \label{tab:edge_regress_results}
  \begin{tabularx}{\textwidth}{p{14em} *{7}{>{\centering\arraybackslash}X}@{}}
    \toprule
    \multicolumn{1}{c}{} & \multicolumn{2}{c}{epic-games-plr} & \multicolumn{2}{c}{air-traffic-2019-rlr} & \multicolumn{2}{c}{air-traffic-2015-rlr} & \multicolumn{1}{c}{Avg. Rk.} \\ 
    \cmidrule(lr){2-3} \cmidrule(lr){4-5} \cmidrule(lr){6-7} \cmidrule(l){8-8} 
    Method & MAE & RMSE & MAE & RMSE & MAE & RMSE & - \\ 
    \midrule
    \multicolumn{8}{c}{\textit{Non-parametric Methods}} \\ 
    \midrule
    MA-all (K=10)        & 0.1322 & 0.1954 & 0.4510 & 0.5931 & 0.1877 & 0.3005 & 12.3 \\
    MA-src (K=10)        & 0.1480 & 0.2499 & 0.4421 & 0.5833 & 0.1864 & 0.2993 & 11.7 \\
    MA-dst (K=10)        & 0.2951 & 0.4341 & 0.4444 & 0.5867 & 0.1848 & 0.2970 & 12.0 \\
    \midrule
    \multicolumn{8}{c}{\textit{Graph Neural Networks}} \\ 
    \midrule
    eGCN                 & 0.1178 & 0.1883 & 0.3983 & 0.5467 & 0.1673 & 0.2965 & \underline{4.3} \\
    eGCN-rich            & \underline{0.1062} & \underline{0.1788} & 0.4151 & 0.5593 & 0.1727 & 0.2975 & 6.7 \\
    eGSage               & 0.1205 & 0.1894 & 0.4021 & 0.5502 & 0.1727 & 0.2975 & 8.0 \\
    eGSage-rich          & 0.1191 & 0.1883 & 0.4028 & 0.5516 & 0.1667 & 0.2933 & 6.0 \\
    eGAT                 & 0.1180 & 0.1883 & 0.4252 & 0.5699 & 0.1665 & 0.2944 & 6.0 \\
    eGAT-rich            & 0.1186 & 0.1888 & 0.3995 & 0.5474 & 0.1669 & 0.2942 & 5.0 \\
    eGTransf             & 0.1191 & 0.1894 & 0.4174 & 0.5597 & 0.1678 & 0.2964 & 8.0 \\
    eGTransf-rich        & 0.1190 & 0.1887 & 0.4028 & 0.5507 & 0.1689 & 0.2974 & 7.0 \\
    \midrule
    Trans. with X-LogSMask (1-layer) & \textbf{0.0149} & \textbf{0.0416} & \underline{0.1066} & \textbf{0.1651} & \underline{0.0996} & \textbf{0.1588} & \textbf{1.0} \\
    Trans. with X-LogSMask (ours)    & \textbf{0.0149} & \textbf{0.0416} & \textbf{0.1065} & \underline{0.1654} & \textbf{0.0989} & \underline{0.1591} & \textbf{1.0} \\
    \bottomrule
  \end{tabularx}
\end{table}

For link prediction, both the one-layer and full X-LogSMask models achieved the best performance on Cora and Citeseer (Table \ref{tab:link_pred_results}). The models obtained MRR scores of 59.9 on Cora and 71.5 on Citeseer, with an average rank of 1.0. Compared with the strongest non-X-LogSMask baseline, LPFormer, this corresponded to gains of 20.48 MRR points on Cora and 6.08 MRR points on Citeseer. The identical performance of the one-layer and full configurations indicated that edge-level structural signals were captured effectively without additional Transformer depth in this setting.

\begin{table}[t]
\centering
\caption{Experimental results on edge-level link prediction tasks. The top \textbf{1\textsuperscript{st}} and \underline{$2^{nd}$} are highlighted.}
\label{tab:link_pred_results}
\begin{tabularx}{\textwidth}{p{14em} *{3}{>{\centering\arraybackslash}X}@{}}
\toprule
\multicolumn{1}{c}{} & Cora & Citeseer & Avg. Rk. \\
\cmidrule(lr){2-2} \cmidrule(lr){3-3} \cmidrule(l){4-4}
\centering Metric & MRR & MRR & - \\
\midrule
\multicolumn{4}{c}{\textit{Heuristic Methods}} \\
\midrule
CN & 20.99 & 28.34 & 12.5 \\
AA & 31.87 & 29.37 & 9.0 \\
RA & 30.79 & 27.61 & 10.5 \\
\midrule
\multicolumn{4}{c}{\textit{Graph Neural Networks}} \\
\midrule
GCN \cite{kipf2017semi} & 32.50 & 50.01 & 7.0 \\
SAGE \cite{hamilton2017inductive} & 37.83 & 47.84 & 6.5 \\
SEAL \cite{zhang2018link} & 26.69 & 39.36 & 10.0 \\
Neo-GNN \cite{yun2021neo} & 22.65 & 53.97 & 9.5 \\
BUDDY \cite{zhao2023buddy} & 26.40 & 59.48 & 8.0 \\
NCN \cite{liu2024ncn} & 32.93 & 54.97 & 5.5 \\
NCNC \cite{liu2024ncn} & 29.01 & 64.03 & 6.5 \\
LPFormer \cite{Shomer_2024} & \underline{39.42} & \underline{65.42} & \underline{3.0} \\
\midrule
Trans. with X-LogSMask (1-layer) & \textbf{59.9} & \textbf{71.5} & \textbf{1.0} \\
Trans. with X-LogSMask (ours) & \textbf{59.9} & \textbf{71.5} & \textbf{1.0} \\
\bottomrule
\end{tabularx}
\end{table}

\subsection{Graph Level}

\subsubsection{Experimental Settings}

We evaluated graph-level prediction on seven benchmark datasets: NCI1, D\&D, PROTEINS, MUTAG, COLLAB, IMDB-B and MOLHIV \cite{morris2020tudataset,hu2020ogb}. The first six datasets were taken from TU-Dataset and cover molecular, protein and social-network classification tasks. MOLHIV was taken from the Open Graph Benchmark and was evaluated using AUROC. For TU-Dataset benchmarks, we used random training, validation and test splits in an 8:1:1 ratio. For MOLHIV, we used the prescribed benchmark split.

We compared X-LogSMask with four message-passing GNNs, GCN, GAT, GIN and GatedGCN \cite{kipf2017semi,velivckovic2018graph,xu2019gin,li2016gated}, and seven Graph Transformer variants, GT, SAN, Graphormer, GraphTrans, GMT, SAT and GraphGPS \cite{dwivedi2021gt,kreuzer2021san,ying2021graphormer,wu2021graphtrans,beak2021gmt,chen2022sat,rampasek2022graphgps}. Baseline values were taken from prior benchmark studies to maintain consistency with established evaluation protocols \cite{liu2024gradformer,beak2021gmt}.

\subsubsection{Graph-level Performance}

The full X-LogSMask model achieved the best average rank on graph-level benchmarks, with an average rank of 1.7 across seven datasets (Table \ref{tab:graph_cls_reg_results}). It ranked first on D\&D, PROTEINS, MUTAG and MOLHIV, and second on IMDB-B. The largest improvement was observed on PROTEINS, where X-LogSMask reached 80.63\% accuracy, compared with 75.77\% for the strongest non-X-LogSMask baseline. On D\&D and MUTAG, the model achieved 81.20\% and 88.89\% accuracy, exceeding the best competing results by 2.48 and 1.67 percentage points, respectively.

\begin{table}[htbp]
  \centering
  \caption{Experimental results on graph-level tasks. The top \textbf{1\textsuperscript{st}} and \underline{$2^{nd}$} are highlighted.}
  \label{tab:graph_cls_reg_results}
  \begin{tabularx}{\textwidth}{p{16em} *{8}{>{\centering\arraybackslash}X}@{}}
    \toprule
    \multicolumn{1}{c}{} & NCI1 & D\&D & PROTE. & MUTAG & COLLAB & IMDB-B & MOLHIV & Avg. Rk. \\
    \cmidrule(lr){2-2} \cmidrule(lr){3-3} \cmidrule(lr){4-4} \cmidrule(lr){5-5} \cmidrule(lr){6-6} \cmidrule(lr){7-7} \cmidrule(l){8-8} \cmidrule(l){9-9}
    \centering Metric & Acc. & Acc. & Acc. & Acc. & Acc. & Acc. & AUROC. & - \\
    \midrule
    \multicolumn{9}{c}{\textit{MP-GNNs}} \\
    \midrule
    GCN [Kipf and Welling, 2017] & 79.68 & 72.05 & 71.70 & 73.40 & 71.92 & 74.30 & 75.99 & 10.0 \\
    GAT [Velicković et al., 2018] & 79.88 & - & 72.00 & 73.90 & 75.80 & 74.70 & - & 10.4 \\
    GIN [Xu et al., 2019b] & 81.70 & 70.79 & 73.76 & 84.50 & 73.32 & 75.10 & 77.07 & 6.7 \\
    GatedGCN [Bresson and Laurent, 2017] & 81.17 & - & 74.65 & 85.00 & 80.70 & 73.20 & - & 6.8 \\
    \midrule
    \multicolumn{9}{c}{\textit{Graph Transformer variants}} \\
    \midrule
    GT [Dwivedi and Bresson, 2021] & 80.15 & - & 73.94 & 83.90 & 79.63 & 73.10 & - & 9.2 \\
    SAN [Kreuzer et al., 2021] & 80.50 & - & 74.11 & 78.80 & 79.42 & 72.10 & 77.85 & 8.8 \\
    Graphormer [Ying et al., 2021] & 81.44 & - & 75.29 & 80.52 & \textbf{81.80} & 73.40 & 74.55 & 6.2 \\
    GraphTrans [Cai and Lam, 2019] & \underline{82.60} & - & 75.18 & \underline{87.22} & 79.81 & 74.50 & 76.33 & 5.0 \\
    GMT [Baek et al., 2021] & - & \underline{78.72} & 75.09 & 83.44 & 80.74 & 73.48 & - & 6.0 \\
    SAT [Chen et al., 2022] & 80.69 & - & 73.32 & 80.50 & 80.05 & 75.90 & - & 7.8 \\
    GraphGPS [Ramp\'{a}\v{s}ek et al., 2022] & \textbf{84.21} & - & \underline{75.77} & 85.00 & \underline{81.40} & \textbf{77.40} & \underline{78.80} & \underline{2.0} \\
    \midrule
    Trans. with X-LogSMask (1-layer) & 81.27 & \textbf{81.20} & 75.68 & \textbf{88.89} & 79.00 & 76.00 & 77.68 & 4.0 \\
    Trans. with X-LogSMask (ours) & 82.24 & \textbf{81.20} & \textbf{80.63} & \textbf{88.89} & 80.80 & \underline{77.00} & \textbf{78.91} & \textbf{1.7} \\
    \bottomrule
  \end{tabularx}
\end{table}

The gains were not uniform across all datasets. On NCI1, X-LogSMask achieved 82.24\% accuracy, ranking behind GraphGPS and GraphTrans. On COLLAB, it reached 80.80\% accuracy, below Graphormer and GraphGPS. These results show that X-LogSMask provided strong overall graph-level performance, but its advantage was dataset-dependent.

The 1-layer lightweight configuration exhibits remarkable efficiency, matching the full-version's top performance on D\&D and MUTAG while maintaining competitiveness on other datasets. It outperforms most complex Graph Transformer variants and traditional MP-GNNs, demonstrating that X-LogSMask enables reduced model complexity without performance degradation.

\subsection{Further Discussion}

\subsubsection{Ablation}

To quantify the contribution of each core component, we perform an ablation study
on node-level classification benchmarks. Starting from the full model, we removed symmetric normalization, the logarithmic structural mask and the explainable multi-head mechanism one at a time. The results are summarized in Table \ref{tab:ablation_node_cls}.

\begin{table}[htbp]
  \begin{threeparttable}
  \centering
  \caption{Ablation Study on Key Components of the proposed Model.}
  \label{tab:ablation_node_cls}
  \footnotesize\setlength{\tabcolsep}{3pt}%
    \begin{tabular*}{\linewidth}{@{\extracolsep{\fill}} ccccccccccc @{}}
    \toprule
    \multicolumn{3}{c}{Components} &
    \multirow{2}{*}{Cora} & \multirow{2}{*}{Citeseer} & \multirow{2}{*}{Pubmed} & 
    \multirow{2}{*}{Computers} & \multirow{2}{*}{Photo} & \multirow{2}{*}{CS} & 
    \multirow{2}{*}{Physics} & \multirow{2}{*}{WikiCS} \\
    \cmidrule{1-3}    
    \textit{Sym. Norm.} & \textit{LogSMask} & \textit{Expl. MH} & & & & & & & & \\
    \midrule
    \checkmark & \checkmark & \checkmark & 82.70 & 71.70 & 79.60 & 91.57 & 96.86 & 96.62 & 97.68 & 80.36 \\
    $\times$ & \checkmark & \checkmark & 77.00\textsubscript{\textcolor{blue}{↓5.70}} & 7.70\textsubscript{\textcolor{blue}{↓64.00}} & 74.50\textsubscript{\textcolor{blue}{↓5.10}} & 3.27\textsubscript{\textcolor{blue}{↓88.30}} & 4.84\textsubscript{\textcolor{blue}{↓95.00}} & 95.69\textsubscript{\textcolor{blue}{↓0.93}} & 97.16\textsubscript{\textcolor{blue}{↓0.52}} & 4.01\textsubscript{\textcolor{blue}{↓76.35}} \\
    \checkmark & $\times$ & \checkmark & 58.80\textsubscript{\textcolor{blue}{↓23.90}} & 57.80\textsubscript{\textcolor{blue}{↓13.90}} & 72.40\textsubscript{\textcolor{blue}{↓7.20}} & 76.74\textsubscript{\textcolor{blue}{↓14.83}} & 87.06\textsubscript{\textcolor{blue}{↓9.80}} & 95.20\textsubscript{\textcolor{blue}{↓1.42}} & 96.35\textsubscript{\textcolor{blue}{↓1.33}} & 71.73\textsubscript{\textcolor{blue}{↓8.63}} \\
    \checkmark & \checkmark & $\times$ & 77.40\textsubscript{\textcolor{blue}{↓5.30}} & 68.00\textsubscript{\textcolor{blue}{↓3.70}} & 76.20\textsubscript{\textcolor{blue}{↓3.40}} & 92.01\textsubscript{\textcolor{red}{↑0.44}} & 95.68\textsubscript{\textcolor{blue}{↓1.18}} & 95.80\textsubscript{\textcolor{blue}{↓0.82}} & 97.68\textsubscript{\textcolor{blue}{=0.00}} & 79.42\textsubscript{\textcolor{blue}{↓0.94}} \\
    \bottomrule
    \end{tabular*}%
    \begin{tablenotes}[para,flushleft]
      \small \textit{Note:} \textcolor{blue}{↓} = performance degradation, \textcolor{red}{↑} = performance improvement.
    \end{tablenotes}
  \end{threeparttable}
\end{table}%

Removing symmetric normalization caused the largest performance collapse. Accuracy dropped by up to 95.00\% on Photo and 88.30\% on Computers, showing that degree normalization is essential for stable propagation and for preventing multi-hop weights from becoming overly biased toward high-degree nodes. Removing the logarithmic structural mask also caused severe degradation, including a 23.90\% drop on Cora, indicating that the structural mask is the main source of graph-specific inductive bias. By contrast, removing the explainable multi-head mechanism produced only modest changes, and occasionally a small gain, such as the 0.44\% increase on Computers. This suggests that the multi-head design mainly supports structural specialization and interpretability, rather than serving as the primary source of predictive capacity.

\subsubsection{Interpretability of X-LogSMask}
Self-attention energy matrices provide a direct view of inter-node message passing. To inspect how X-LogSMask decomposes attention, we visualized the raw energy term, the structural term and their sum across two layers and four heads of a pre-trained model on PROTEINS, as shown in Fig. \ref{fig:exp_result_2}.

The raw energy term mainly captured node-attribute similarity, whereas X-LogSMask injected edge-level structure. Their additive combination therefore coupled content similarity with topological bias in a single attention score. The heads also showed distinct hop preferences, indicating that different heads specialized in different structural radii. Comparing the two layers further showed a clear progression: the first layer mixed node and edge information, whereas the final layer before classification became more GNN-like, with edge structure exerting a stronger influence on node representations.

\begin{figure}[H]
    \centering
    \includegraphics[width=0.85\linewidth]{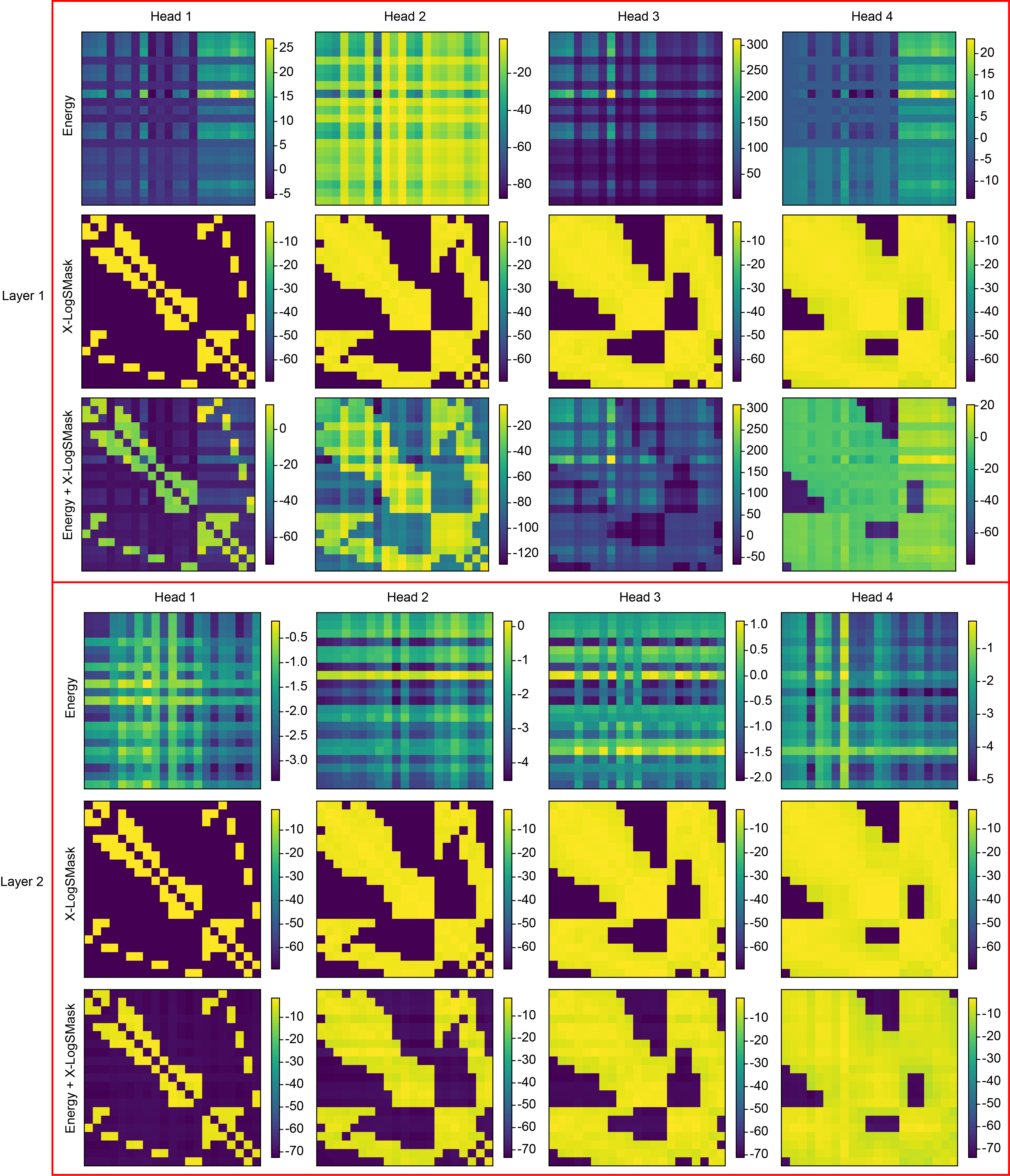}
    \caption{Layer-wise attention decomposition in a pre-trained 2-layer, 4-head X-LogSMask Transformer on PROTEINS. For each layer and head, we compare the raw attention energy, the X-LogSMask structural term, and their additive combination. The comparison shows that the energy term mainly captures node-attribute similarity, whereas X-LogSMask injects edge-level structure; their sum therefore couples node-content information and structural bias within a single attention score. Head-specific patterns further indicate that different heads focus on different hop ranges, allowing one layer to approximate multi-hop connectivity without deep stacking. Comparing the two layers shows a clear progression: the first layer still mixes node and edge information, whereas the final layer before classification becomes more GNN-like, with edge structure exerting a stronger influence on node representations.}
    \label{fig:exp_result_2}
\end{figure}

\subsubsection{Lightweight 1-layer Solution}
\label{sec:1-layer comparison}

The head-wise structural decomposition of X-LogSMask allows a single Transformer layer to capture multi-hop graph context. Across the benchmark suite, the 1-layer model remained competitive with deeper baselines and matched the full model on several edge-level tasks, as shown in Tables \ref{tab:node_cls_accuracy_with_avgrank}, \ref{tab:edge_regress_results}, \ref{tab:link_pred_results}, \ref{tab:graph_cls_reg_results}, and Fig. \ref{fig:light1l}.

\begin{figure}[H]
    \centering
    \includegraphics[width=0.8\linewidth]{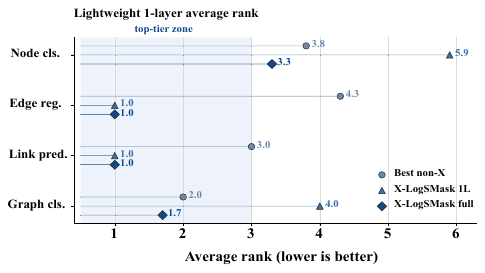}
    \caption{Average ranks of X-LogSMask variants across task families. The one-layer configuration is compared with the full X-LogSMask and the best non-X baseline, with lower ranks indicating better relative performance and the shaded region denoting the top-tier rank zone.}
    \label{fig:light1l}
\end{figure}

This result shows that structural masking can partially substitute for depth in graph learning. It does not remove the quadratic cost of dense attention within a layer, but it reduces the need for repeated stacking and therefore lowers the overall depth-related cost. We note that the 1-layer design trades breadth for depth: careful selection of head counts and head-specific structural orders is important to maximize the effective receptive field without introducing redundancy. The trade-off is also clear: one-layer X-LogSMask is often strong, but the full model still performs better on several graph-level datasets.

\section{Conclusion}

We present X-LogSMask, a concise and interpretable modification to the self-attention matrix. It embeds graph topology as an additive structural bias while preserving the discriminative power of multiplicative attention, thereby enabling Transformers to natively handle graph-structured data. Through comprehensive experiments on node-, edge-, and graph-level benchmarks, X-LogSMask consistently improves performance over strong baselines and establishes new state-of-the-art results. Moreover, we present a lightweight 1-layer solution for large-scale graphs and validate the model’s interpretability. The core insight of introducing structured, multi-scale masks into attention can be adapted to other structured modalities, suggesting a general pathway for other domains.

\section*{Data availability}

All datasets used in this study are publicly available benchmark datasets. Cora, Citeseer, Pubmed, Computers, Photo, CS, Physics, and WikiCS can be accessed through the PyTorch Geometric dataset loaders for Planetoid, Amazon, Coauthor, and WikiCS datasets at \url{https://pytorch-geometric.readthedocs.io/en/latest/modules/datasets.html}. The edge-regression datasets are available from the TER benchmark repositories on Hugging Face: epic-games-plr at \url{https://huggingface.co/datasets/cash-app-inc/epic-games-plr}, air-traffic-2019-rlr at \url{https://huggingface.co/datasets/cash-app-inc/air-traffic-2019-rlr}, and air-traffic-2015-rlr at \url{https://huggingface.co/datasets/cash-app-inc/air-traffic-2015-rlr}. NCI1, D\&D, PROTEINS, MUTAG, COLLAB, and IMDB-B are available from TUDataset at \url{https://chrsmrrs.github.io/datasets/docs/datasets/}. MOLHIV is available from the Open Graph Benchmark graph property prediction collection at \url{https://ogb.stanford.edu/docs/graphprop/#ogbg-mol}. CIFAR-10 is available from the original CIFAR-10 data source at \url{https://www.cs.toronto.edu/~kriz/cifar.html}.

\section*{Code availability}

The source code is available at \url{https://github.com/LiLeyan-0120/X-LogSMask}.

{\appendix
\section*{Datasets}

We utilize a total of 20 datasets to evaluate model performance across node-, edge-, and graph-level tasks. These graph-learning experiments were executed on A100 GPUs, whereas the auxiliary CIFAR-10 Vision Transformer experiments were conducted on a single A5000 GPU.

\begin{itemize}
    \item Node-level: CS, Physics (OGB [Hu et al., 2020]); Computers, Photo (Benchmarking GNN [Dwivedi et al., 2023]); Cora, Citeseer, Pubmed [Sen et al., 2008]; WikiCS [Zitnik \& Leskovec, 2017].
    \item Edge-level: Cora, Citeseer [Sen et al., 2008]; epic-games-plr, air-traffic-2019-rlr, air-traffic-2015-rlr [Ozmen et al., 2024].
    \item Graph-level: NCI1, D\&D, PROTEINS, MUTAG, COLLAB, IMDB-B (TU-Dataset [Morris et al., 2020]); MOLHIV (OGB [Hu et al., 2020]).
\end{itemize}

These datasets span citation networks, social networks, molecular graphs, biological networks, traffic networks, game review platforms, and NFT transaction systems, with varied scales to ensure comprehensive evaluation of generalization across graph types and tasks. Detailed statistics are in Tables \ref{tab:node_datasets}, \ref{tab:edge_datasets}, and \ref{tab:graph_datasets}.

\begin{table}[htbp]
    \centering
    \caption{Datasets for Node-Level Tasks}
    \begin{tabular}{cccccc}
        \toprule
        Dataset & \# Nodes & \# Edges & Predict level & Predict task & Metric \\
        \midrule
        Cora & 2,708 & 5,278 & node & 7-class classif. & Accuracy \\
        Citeseer & 3,327 & 4,732 & node & 6-class classif. & Accuracy \\
        Pubmed & 19,717 & 44,324 & node & 3-class classif. & Accuracy \\
        Computers & 13,752 & 245,861 & node & 10-class classif. & Accuracy \\
        Photo & 7,650 & 119,081 & node & 8-class classif. & Accuracy \\
        CS & 18,333 & 81,894 & node & 15-class classif. & Accuracy \\
        Physics & 34,493 & 247,962 & node & 5-class classif. & Accuracy \\
        WikiCS & 11,701 & 216,123 & node & 10-class classif. & Accuracy \\
        \bottomrule
    \end{tabular}
    \label{tab:node_datasets}
\end{table}

\begin{table}[htbp]
  \centering
  \caption{Datasets for Edge-Level Tasks}
  \begin{tabular}{cccccc}
    \toprule
    Dataset & \# Nodes & \# Edges & Predict level & Predict task & Metric \\
    \midrule
    Cora & 2,708 & 5,278 & edge & link prediction & MRR \\
    Citeseer & 3,327 & 4,732 & edge & link prediction & MRR \\
    epic-games-plr & 1,156 & 17,584 & edge & edge regression & MAE, RMSE \\
    air-traffic-2019-rlr & 274 & 484,551 & edge & edge regression & MAE, RMSE \\
    air-traffic-2015-rlr & 257 & 5,138,263 & edge & edge regression & MAE, RMSE \\
    \bottomrule
  \end{tabular}
  \label{tab:edge_datasets}
\end{table}

\begin{table}[htbp]
    \centering
    \caption{Datasets for Graph-Level Tasks}
    \begin{tabular}{ccccccc}
        \toprule
        Dataset & \# Graphs & \# Nodes (avg) & \# Edges (avg) & Predict level & Predict task & Metric \\
        \midrule
        NCI1 & 4,110 & 29.8 & 32.3 & graph & 2-class classif. & Accuracy \\
        D\&D & 1,178 & 284.3 & 715.6 & graph & 2-class classif. & Accuracy \\
        PROTEINS & 1,113 & 39.1 & 72.8 & graph & 2-class classif. & Accuracy \\
        MUTAG & 188 & 17.9 & 19.7 & graph & 2-class classif. & Accuracy \\
        COLLAB & 5,000 & 74.5 & 2457.7 & graph & 3-class classif. & Accuracy \\
        IMDB-B & 1,000 & 19.8 & 96.5 & graph & 2-class classif. & Accuracy \\
        MOLHIV & 41,127 & 25.5 & 27.5 & graph & 2-class classif. & AUROC. \\
        \bottomrule
    \end{tabular}
    \label{tab:graph_datasets}
\end{table}

\begin{table}[htbp]
\centering
\caption{Hyper-parameter configurations for the CIFAR-10 image-classification experiments. All runs share identical data splits and random seeds.}
\label{tab:vit_params}
\footnotesize\setlength{\tabcolsep}{4pt}
\begin{tabular}{cccccccccccc}
\toprule
Run & Model & Patch size & Embed. dim & Pos. Encoding & Layers & Heads & Dropout & Batch size & LR & Optimiser \\
\midrule
\multirow{2}{*}{T1} & ViT            & $2\!\times\!2$ & 256 & \checkmark & 8 & 8 & 0.1 & 128 & $1\!\times\!10^{-3}$ & AdamW \\
                    & ViT + X-LogSMask  & $2\!\times\!2$ & 256 & $\times$ & 8 & 8 & 0.1 & 128 & $1\!\times\!10^{-3}$ & AdamW \\
\midrule
\multirow{2}{*}{T2} & ViT            & $2\!\times\!2$ & 256 & \checkmark & 6 & 8 & 0.1 & 128 & $1\!\times\!10^{-3}$ & AdamW \\
                    & ViT + X-LogSMask  & $2\!\times\!2$ & 256 & $\times$ & 6 & 8 & 0.1 & 128 & $1\!\times\!10^{-3}$ & AdamW \\
\midrule
\multirow{2}{*}{T3} & ViT            & $2\!\times\!2$ & 256 & \checkmark & 6 & 4 & 0.1 & 128 & $1\!\times\!10^{-3}$ & AdamW \\
                    & ViT + X-LogSMask  & $2\!\times\!2$ & 256 & $\times$ & 6 & 4 & 0.1 & 128 & $1\!\times\!10^{-3}$ & AdamW \\
\bottomrule
\end{tabular}
\end{table}

\section*{Extending X-LogSMask to Vision Transformers}

To test whether the structural mask transfers beyond graph data, we integrated X-LogSMask into a standard Vision Transformer and evaluated it on CIFAR-10. To isolate the effect of the mask, we kept the training setup fixed, without learning-rate warmup, extra augmentation or extensive hyperparameter tuning. The two models were trained under identical settings on a single A5000 GPU, and the hyperparameters are listed in Table \ref{tab:vit_params}.

X-LogSMask conferred a modest yet consistent improvement in classification accuracy relative to the baseline ViT. Across the three controlled runs, X-LogSMask improved ViT accuracy from 82.18\% to 86.07\% in T1, from 80.96\% to 85.08\% in T2, and from 81.44\% to 84.15\% in T3, corresponding to gains of 3.89, 4.12 and 2.71 percentage points. These results show that the structural mask can also improve a grid-based Transformer, although the main focus of this paper remains graph-structured data.

\begin{table}[htbp]
  \centering
  \caption{Image-classification accuracy on CIFAR-10.}
  \label{tab:cifar_ablation}
  \begin{tabular*}{\linewidth}{@{\extracolsep{\fill}} >{\centering\arraybackslash}m{1.8cm}
                             >{\centering\arraybackslash}m{2.4cm}
                             >{\centering\arraybackslash}m{2.5cm} @{}}
    \toprule
    Run & Model & Accuracy/\% \\
    \midrule
    \multirow{2}{*}{T1} & ViT           & 82.18 \\
                        & ViT + X-LogSMask & 86.07\textsubscript{\textcolor{red}{↑3.89}} \\
    \addlinespace
    \multirow{2}{*}{T2} & ViT           & 80.96 \\
                        & ViT + X-LogSMask & 85.08\textsubscript{\textcolor{red}{↑4.12}} \\
    \addlinespace
    \multirow{2}{*}{T3} & ViT           & 81.44 \\
                        & ViT + X-LogSMask & 84.15\textsubscript{\textcolor{red}{↑2.71}} \\
    \bottomrule
  \end{tabular*}
  \begin{tablenotes}[para,flushleft]
    \small \textit{Note:} \textcolor{red}{↑} indicates improvement brought by X-LogSMask.
  \end{tablenotes}
\end{table}
}

\bibliographystyle{unsrtnat}
\bibliography{citation}

@article{vaswani2017attention,
  title={Attention is all you need},
  author={Vaswani, Ashish and Shazeer, Noam and Parmar, Niki and Uszkoreit, Jakob and Jones, Llion and Gomez, Aidan N and Kaiser, {\L}ukasz and Polosukhin, Illia},
  journal={Advances in Neural Information Processing Systems},
  volume={30},
  year={2017}
}

@inproceedings{devlin2019bert,
  title={BERT: Pre-training of deep bidirectional transformers for language understanding},
  author={Devlin, Jacob and Chang, Ming-Wei and Lee, Kenton and Toutanova, Kristina},
  booktitle={Proceedings of the 2019 Conference of the North American Chapter of the Association for Computational Linguistics: Human Language Technologies},
  pages={4171--4186},
  year={2019},
  publisher={Association for Computational Linguistics},
  doi={10.18653/v1/N19-1423},
  url={https://aclanthology.org/N19-1423/}
}

@inproceedings{Shomer_2024, series={KDD ’24},
   title={LPFormer: An Adaptive Graph Transformer for Link Prediction},
   url={http://dx.doi.org/10.1145/3637528.3672025},
   DOI={10.1145/3637528.3672025},
   booktitle={Proceedings of the 30th ACM SIGKDD Conference on Knowledge Discovery and Data Mining},
   publisher={ACM},
   author={Shomer, Harry and Ma, Yao and Mao, Haitao and Li, Juanhui and Wu, Bo and Tang, Jiliang},
   year={2024},
   month=aug, pages={2686–2698},
   collection={KDD ’24} }

@article{brown2020language,
  title={Language models are few-shot learners},
  author={Brown, Tom and Mann, Benjamin and Ryder, Nick and Subbiah, Melanie and Kaplan, Jared D and Dhariwal, Prafulla and Neelakantan, Arvind and Shyam, Pranav and Sastry, Girish and Askell, Amanda and others},
  journal={Advances in Neural Information Processing Systems},
  volume={33},
  pages={1877--1901},
  year={2020}
}

@inproceedings{kipf2017semi,
  title={Semi-supervised classification with graph convolutional networks},
  author={Kipf, Thomas N and Welling, Max},
  booktitle={International Conference on Learning Representations},
  year={2017},
  eprint={1609.02907},
  archivePrefix={arXiv},
  primaryClass={cs.LG},
  url={https://openreview.net/forum?id=SJU4ayYgl}
}

@inproceedings{xu2019gin,
  title={How powerful are graph neural networks?},
  author={Xu, Keyulu and Hu, Weihua and Leskovec, Jure and Jegelka, Stefanie},
  booktitle={International Conference on Learning Representations},
  year={2019}
}

@inproceedings{dosovitskiy2020image,
  title={An image is worth 16x16 words: Transformers for image recognition at scale},
  author={Dosovitskiy, Alexey and Beyer, Lucas and Kolesnikov, Alexander and Weissenborn, Dirk and Zhai, Xiaohua and Unterthiner, Thomas and Dehghani, Mostafa and Minderer, Matthias and Heigold, Georg and Gelly, Sylvain and others},
  booktitle={International Conference on Learning Representations},
  year={2021},
  url={https://openreview.net/forum?id=YicbFdNTTy}
}

@inproceedings{gong2021ast,
  title={AST: Audio spectrogram transformer},
  author={Gong, Yuan and Chung, Yu-An and Glass, James},
  booktitle={Proceedings of Interspeech 2021},
  pages={571--575},
  year={2021},
  doi={10.21437/Interspeech.2021-698}
}

@inproceedings{zhou2021informer,
  title={Informer: Beyond efficient transformer for long sequence time-series forecasting},
  author={Zhou, Haoyi and Zhang, Shanghang and Peng, Jieqi and Zhang, Shuai and Li, Jianbin and Xiong, Hui and Wang, Wancai},
  booktitle={Proceedings of the AAAI Conference on Artificial Intelligence},
  volume={35},
  number={12},
  pages={11106--11115},
  year={2021}
}

@inproceedings{velivckovic2018graph,
  title={Graph attention networks},
  author={Veli{\v{c}}kovi{\'c}, Petar and Cucurull, Guillem and Casanova, Arantxa and Romero, Adriana and Li{\`o}, Pietro and Bengio, Yoshua},
  booktitle={International Conference on Learning Representations},
  year={2018}
}

@inproceedings{ying2021graphormer,
  title={Do Transformers Really Perform Bad for Graph Representation?},
  author={Ying, Chengxuan and Cai, Tianle and Luo, Shengjie and Zheng, Shuxin and Ke, Guolin and He, Di and Shen, Yanming and Liu, Tie-Yan},
  booktitle={Advances in Neural Information Processing Systems},
  volume={34},
  pages={28877--28888},
  year={2021}
}

@inproceedings{liu2024gradformer,
  title={Gradformer: Graph Transformer with Exponential Decay},
  author={Liu, Chuang and Yao, Zelin and Zhan, Yibing and Ma, Xueqi and Pan, Shirui and Hu, Wenbin},
  booktitle={Proceedings of the Thirty-Third International Joint Conference on Artificial Intelligence},
  pages={2171--2179},
  year={2024},
  doi={10.24963/ijcai.2024/240}
}

@article{qiu2022eigenformer,
  title={Graph Transformers without Positional Encodings},
  author={Garg, Ayush},
  journal={arXiv preprint arXiv:2401.17791},
  year={2024},
  eprint={2401.17791},
  archivePrefix={arXiv},
  primaryClass={cs.LG}
}

@inproceedings{zhang2020transgnn,
  title={TransGNN: Harnessing the Collaborative Power of Transformers and Graph Neural Networks for Recommender Systems},
  author={Zhang, Peiyan and Yan, Yuchen and Zhang, Xi and Li, Chaozhuo and Wang, Senzhang and Huang, Feiran and Kim, Sunghun},
  booktitle={Proceedings of the 47th International ACM SIGIR Conference on Research and Development in Information Retrieval},
  pages={1285--1295},
  year={2024}
}

@inproceedings{rampasek2022graphgps,
  title={Recipe for a General, Powerful, Scalable Graph Transformer},
  author={Ramp\'a\v{s}ek, Ladislav and Galkin, Mikhail and Dwivedi, Vijay Prakash and Luu, Anh Tuan and Wolf, Guy and Beaini, Dominique},
  booktitle={Advances in Neural Information Processing Systems},
  volume={35},
  year={2022}
}

@inproceedings{dwivedi2021gt,
  title={A Generalization of Transformer Networks to Graphs},
  author={Dwivedi, Vijay Prakash and Bresson, Xavier},
  booktitle={AAAI Workshop on Deep Learning on Graphs: Methods and Applications},
  year={2021}
}

@inproceedings{beak2021gmt,
  title={Accurate Learning of Graph Representations with Graph Multiset Pooling},
  author={Baek, Jinheon and Kang, Minki and Hwang, Sung Ju},
  booktitle={International Conference on Learning Representations},
  year={2021}
}

@inproceedings{chen2022nagphormer,
  title={NAGphormer: A Tokenized Graph Transformer for Node Classification in Large Graphs},
  author={Chen, Jinsong and Gao, Kaiyuan and Li, Gaichao and He, Kun},
  booktitle={International Conference on Learning Representations},
  year={2023}
}

@inproceedings{chen2022sat,
  title={Structure-Aware Transformer for Graph Representation Learning},
  author={Chen, Dexiong and O'Bray, Leslie and Borgwardt, Karsten},
  booktitle={Proceedings of the 39th International Conference on Machine Learning},
  series={Proceedings of Machine Learning Research},
  volume={162},
  pages={3469--3489},
  year={2022}
}

@inproceedings{deng2024polynormer,
  title={Polynormer: Polynomial-Expressive Graph Transformer in Linear Time},
  author={Deng, Chenhui and Yue, Zichao and Zhang, Zhiru},
  booktitle={International Conference on Learning Representations},
  year={2024}
}

@inproceedings{hamilton2017inductive,
  title={Inductive representation learning on large graphs},
  author={Hamilton, William L and Ying, Zhitao and Leskovec, Jure},
  booktitle={Advances in Neural Information Processing Systems},
  volume={30},
  year={2017}
}

@inproceedings{kreuzer2021san,
  title={Rethinking Graph Transformers with Spectral Attention},
  author={Kreuzer, Devin and Beaini, Dominique and Hamilton, William L. and L\'{e}tourneau, Vincent and Tossou, Prudencio},
  booktitle={Advances in Neural Information Processing Systems},
  volume={34},
  year={2021}
}

@inproceedings{kong2023goat,
  title={GOAT: A Global Transformer on Large-scale Graphs},
  author={Kong, Kezhi and Chen, Jiuhai and Kirchenbauer, John and Ni, Renkun and Bruss, C. Bayan and Goldstein, Tom},
  booktitle={Proceedings of the 40th International Conference on Machine Learning},
  series={Proceedings of Machine Learning Research},
  volume={202},
  pages={17375--17390},
  year={2023},
  organization={PMLR}
}

@article{li2016gated,
  title={Residual Gated Graph ConvNets},
  author={Bresson, Xavier and Laurent, Thomas},
  journal={arXiv preprint arXiv:1711.07553},
  year={2017},
  eprint={1711.07553},
  archivePrefix={arXiv},
  primaryClass={cs.LG}
}

@inproceedings{luo2024classic,
  title={Classic GNNs are Strong Baselines: Reassessing GNNs for Node Classification},
  author={Luo, Yuankai and Shi, Lei and Wu, Xiao-Ming},
  booktitle={Advances in Neural Information Processing Systems},
  year={2024}
}

@inproceedings{liu2024ncn,
  title={Neural Common Neighbor with Completion for Link Prediction},
  author={Wang, Xiyuan and Yang, Haotong and Zhang, Muhan},
  booktitle={International Conference on Learning Representations},
  year={2024},
}

@article{morris2020tudataset,
  title={TUDataset: A collection of benchmark datasets for learning with graphs},
  author={Morris, Christopher and Kriege, Nils M and Bause, Frank and Kersting, Kristian and Mutzel, Petra and Neumann, Marion},
  booktitle={ICML 2020 Workshop on Graph Representation Learning and Beyond (GRL+ 2020)},
  archivePrefix={arXiv},
  eprint={2007.08663},
  url={www.graphlearning.io},
  year={2020}
}

@article{sen2008collective,
  title={Collective classification in network data},
  author={Sen, Prithviraj and Namata, Galileo and Bilgic, Mustafa and Getoor, Lise and Galligher, Brian and Eliassi-Rad, Tina},
  journal={AI Magazine},
  volume={29},
  number={3},
  pages={93--93},
  year={2008}
}

@article{shchur2018pitfalls,
  title={Pitfalls of Graph Neural Network Evaluation},
  author={Shchur, Oleksandr and Mumme, Maximilian and Bojchevski, Aleksandar and G{\"u}nnemann, Stephan},
  journal={arXiv preprint arXiv:1811.05868},
  year={2018},
  eprint={1811.05868},
  archivePrefix={arXiv},
  primaryClass={cs.LG}
}

@article{mernyei2020wikics,
  title={Wiki-CS: A Wikipedia-Based Benchmark for Graph Neural Networks},
  author={Mernyei, P{\'e}ter and Cangea, C{\u{a}}t{\u{a}}lina},
  journal={arXiv preprint arXiv:2007.02901},
  year={2020},
  eprint={2007.02901},
  archivePrefix={arXiv},
  primaryClass={cs.LG}
}

@inproceedings{hu2020ogb,
  title={Open Graph Benchmark: Datasets for Machine Learning on Graphs},
  author={Hu, Weihua and Fey, Matthias and Zitnik, Marinka and Dong, Yuxiao and Ren, Hongyu and Liu, Bowen and Catasta, Michele and Leskovec, Jure},
  booktitle={Advances in Neural Information Processing Systems},
  volume={33},
  year={2020}
}

@inproceedings{shirzad2023exphormer,
  title={Exphormer: Sparse Transformers for Graphs},
  author={Shirzad, Hamed and Velingker, Ameya and Venkatachalam, Balaji and Sutherland, Danica J. and Sinop, Ali Kemal},
  booktitle={Proceedings of the 40th International Conference on Machine Learning},
  series={Proceedings of Machine Learning Research},
  volume={202},
  pages={31613--31632},
  year={2023},
}

@inproceedings{wu2022nodeformer,
  title={NodeFormer: A Scalable Graph Structure Learning Transformer for Node Classification},
  author={Wu, Qitian and Zhao, Wentao and Li, Zenan and Wipf, David and Yan, Junchi},
  booktitle={Advances in Neural Information Processing Systems},
  volume={35},
  year={2022}
}

@inproceedings{wu2021graphtrans,
  title={Graph Transformer for Graph-to-Sequence Learning},
  author={Cai, Deng and Lam, Wai},
  booktitle={Proceedings of the AAAI Conference on Artificial Intelligence},
  volume={34},
  number={05},
  pages={7464--7471},
  year={2020},
  doi={10.1609/aaai.v34i05.6243},
  eprint={1911.07470},
  archivePrefix={arXiv},
  primaryClass={cs.CL}
}

@inproceedings{wu2023sgformer,
  title={SGFormer: Simplifying and Empowering Transformers for Large-Graph Representations},
  author={Wu, Qitian and Zhao, Wentao and Yang, Chenxiao and Zhang, Hengrui and Nie, Fan and Jiang, Haitian and Bian, Yatao and Yan, Junchi},
  booktitle={Advances in Neural Information Processing Systems},
  volume={36},
  year={2023}
}

@inproceedings{yun2021neo,
  title={Neo-GNNs: Neighborhood Overlap-aware Graph Neural Networks for Link Prediction},
  author={Yun, Seongjun and Kim, Seoyoon and Lee, Junhyun and Kang, Jaewoo and Kim, Hyunwoo J.},
  booktitle={Advances in Neural Information Processing Systems},
  volume={34},
  pages={13683--13694},
  year={2021}
}

@inproceedings{zhang2018link,
  title={Link prediction based on graph neural networks},
  author={Zhang, Muhan and Chen, Yixin},
  booktitle={Advances in Neural Information Processing Systems},
  volume={31},
  year={2018}
}

@inproceedings{zhao2023buddy,
  title={Graph Neural Networks for Link Prediction with Subgraph Sketching},
  author={Chamberlain, Benjamin Paul and Shirobokov, Sergey and Rossi, Emanuele and Frasca, Fabrizio and Markovich, Thomas and Hammerla, Nils Yannick and Bronstein, Michael M. and Hansmire, Max},
  booktitle={International Conference on Learning Representations},
  year={2023}
}

@article{Ozmen2024BenchmarkingER,
  author    = {Ozmen, Muberra and Regol, Florence and Markovich, Thomas},
  title     = {Benchmarking Edge Regression on Temporal Networks},
  journal   = {Journal of Data-centric Machine Learning Research},
  year      = {2024},
  url       = {https://openreview.net/forum?id=4k4cocpuSw}
}

\end{document}